\documentclass{amsart}
\title{Convergence of Expected Utility For Universal AI}
\author{Peter de Blanc\\Department of Mathematics\\Temple University}
\thanks{Written in association with the Singularity Institute for Artificial Intelligence.}
\date{\today}

\usepackage{algorithmic}

\newtheorem{unbounded_above}{Definition}

\newtheorem{heaven_finder}{Lemma}
\newtheorem{bb}[heaven_finder]{Lemma}

\newtheorem{cor_above}{Corollary}
\newtheorem{cor_both}[cor_above]{Corollary}

\begin{document}
\begin{titlepage}
\maketitle
\end{titlepage}
\section{Abstract}

We consider a sequence of repeated interactions between an agent and an environment. Uncertainty about the environment is captured by a probability distribution over a space of hypotheses, which includes all computable functions. Given a utility function, we can evaluate the expected utility of any computational policy for interaction with the environment. After making some plausible assumptions (and maybe one not-so-plausible assumption), we show that if the utility function is unbounded, then the expected utility of any policy is undefined.

\section{AI Formalism}

We will assume that the interaction between the agent and the environment takes place in discrete time-steps, or cycles. In cycle $n$, the agent outputs an action $y_n \in Y$, and the environment inputs to the agent a perception $x_n \in X$. $Y$ and $X$ are the sets of possible actions and perceptions, respectively, and are considered as subsets of $\mathbb{N}$. Thus the story of all interaction between agent and environment is captured by the two sequences $x = \left(x_1, x_2, \dots\right)$ and $y = \left(y_1, y_2, \dots\right)$.

Let us introduce a notation for substrings. If $s$ is a sequence or string, and $\{a, b\} \subseteq \mathbb{N}$, $a \le b$, then define $s_a^b = \left(s_a, s_{a+1}, \dots s_b\right)$.

We will denote the function instantiated by the environment as $Q: Y^* \rightarrow X$, so that $\forall n \in \mathbb{N}$, $x_n = Q(y_1^n)$. This means that the perception generated by the environment at any given cycle is determined by the agent's actions on that and all previous cycles.

A policy for the agent is a function $p: \left(Y^* \times X^*\right) \rightarrow Y$, so that an agent implementing $p$ at time $n$ will choose an action  $y_n = p\left(y_1^{n-1}, x_1^{n-1}\right)$.

If, at any time, an agent adopts some policy $p$, and continues to follow that policy forever, then $p$ and $Q$ taken together completely determine the future of the sequences $\left(x_n\right)$ and $\left(y_n\right)$. We are particularly interested in the future sequence of perceptions, so we will define a \emph{future function} $\Psi\left(Q, p, y_1^n, x_1^n\right) = x_{n+1}^\infty$.

Because the precise nature of the environment $Q$ is unknown to the agent, we will let $\Omega$ be the set of possible environments. Let $F$ be a $\sigma$-algebra on $\Omega$, and $P: F \rightarrow [0, 1]$ be a probability measure on $F$ which represents the agent's prior information about the environment.

We will also define a function $\Gamma_q: Y^* \rightarrow X^*$ which represents the perception string output by environment $q$ given some action string. Let \newline $\Gamma_q\left(s\right) = \left(q\left(s_1^1\right), q\left(s_1^2\right), \dots, q\left(s\right)\right)$.

The agent will compare the quality of different outcomes using a utility function $U: X^\mathbb{N} \rightarrow \mathbb{R}$. We can then judge a policy by calculating the expected utility of the outcome given that policy, which can be written as

\begin{equation}
	E\left( U\left(x_1^n \Psi\left(Q, p, y_1^n, x_1^n\right)\right) | \Gamma_Q(y_1^n) = x_1^n \right)
\end{equation}

...where Q is being treated as a random variable. When we write a string next to a sequence, as in $x_1^n \Psi\left(Q, p, y_1^n, x_1^n\right)$, we mean to concatenate them. Here, $x_1^n$ represents what the agent has seen in the past, and $\Psi\left(Q, p, y_1^n, x_1^n\right)$ represents something the agent may see in the future. By concatenating them, we get a complete sequence of perceptions, which is the input required by the utility function $U$.

Notice that the expected utility above is a conditional expectation. Except on the very first time-step, the agent will already have some knowledge about the environment. After $n$ cycles, the agent has output the string $y_1^n$, and the environment has output the string $x_1^n$. Thus the agent's knowledge is given by the equation $\Gamma_Q\left(y_1^n\right) = x_1^n$.

Agents such as AIXI (Hutter, 2007) choose actions by comparing the expected utility of different policies. Thus we will focus, in this paper, on calculating the expected utility of a policy.

\section{Assumptions about the Hypothesis Space}

Here we'll make further assumptions about the hypothesis space $\left(\Omega, F, P\right)$. While we could succinctly make strong assumptions that would justify our central claim, we will instead try to give somewhat weaker assumptions, even at the loss of some brevity.

Let $\Omega_C$ be the set of computable total functions mapping $Y^*$ to $X$. We will assume that $\Omega \supseteq \Omega_C$ and that $\left(\forall q \in \Omega_C\right): \{q\} \in F$ and $P\left(\{q\}\right) > 0$. Thus we assume that the agent assigns a nonzero probability to any computable environment function.

Let $\Omega_P$ be the set of computable partial functions from $Y^*$ to $X$. Then $\Omega_C \subset \Omega_P$. The computable partial functions $\Omega_P$ can be indexed by natural numbers, using a surjective computable index function $\phi: \mathbb{N} \rightarrow \Omega_P$. Since the codomain of $\phi$ is a set of partial functions, it may be unclear what we mean when we say that $\phi$ is computable. We mean that $\left(i, s\right) \rightarrow \left(\phi\left(i\right)\right)\left(s\right)$, whose codomain is $X$, is a computable partial function. We will also use the notation $\phi_i = \phi\left(i\right)$.

We'll now assume that there exists a computable total function $\rho: \mathbb{N} \rightarrow \mathbb{Q}$ such that if $\phi_i \in \Omega_C$, then $0 < \rho\left(i\right) \le P\left(\{\phi\left(i\right)\}\right)$. Intuitively, we are saying that $\phi$ is a way of describing computable functions using some sort of language, and that $\rho$ is a way of specifying lower bounds on probabilities based on these descriptions. Note that we make no assumption about $\rho\left(i\right)$ when $\phi_i \notin \Omega_C$.

To see an example of a hypothesis space satisfying all of our assumptions, let $\Omega = \Omega_C$, let $F = 2^{\Omega_P}$, let $\phi$ be any programming language, and let $\rho\left(i\right) = 2^{-i}$. Let
\begin{equation}
	O = \sum_{i \ni \left(\phi_i \in \Omega_C\right)} \rho\left(i\right)
\end{equation}
and for any $\omega \in \Omega$, let
\begin{equation}
	P \left( \{ \omega \} \right) = \frac{1}{O} \sum_{i \ni \left(\phi_i = \omega\right)} \rho\left(i\right)
\end{equation}

\section{Assumptions about the Utility Function}

Perhaps the most philosophically questionable assumption in this paper has already been made in defining the domain of the utility function $U$ as $X^\mathbb{N}$, the set of perception-sequences. This is like assuming that a person cares not about his or her family and friends, but about his or her perception of his or her family and friends.

Since the utility function $U: X^\mathbb{N} \rightarrow \mathbb{R}$ takes as its argument an infinite sequence, we must discuss what it means for such a function to be computable. Obviously any computation which terminates can only look at a finite number of terms. Therefore we will try to approximate $U\left(x\right)$ using prefixes of $x$. We say that $U$ is computable if there exist computable functions $U_L, U_U: X^* \rightarrow \mathbb{Q} \cup \left\{-\infty, +\infty\right\}$ such that, if $x \in X^\mathbb{N}$ and $\bar{x} \in X^*$ and $\bar{x} \sqsubseteq x$, then:

\begin{itemize}
\item $U_L\left(\bar{x}\right) \le U(x) \le U_U\left(\bar{x}\right)$
\item $U_L\left(\bar{x}\right) \rightarrow U\left(x\right)$ and $U_U\left(\bar{x}\right) \rightarrow U\left(x\right)$ as $\bar{x} \rightarrow x$.
\end{itemize}

In any case, we will \emph{not} assume that $U$ is computable, because we do not need such a strong assumption to prove our claims. Instead we will define two possible conditions.

\begin{unbounded_above}
	\label{un_abv}
	Let $D \subseteq X^\mathbb{N}$ and let $U: D \rightarrow \mathbb{R}$. Let $D^p = \{s \in X^* | \left(\exists d \in D\right): s \sqsubseteq d\}$. Then $U$ is \emph{computably unbounded from above} on $D$ if there exists a computable partial function $U_L: D^p \rightarrow \mathbb{Z}$ such that:
	\begin{itemize}
	\item $\left(\forall d \in D\right) \left(\forall s \in D^p\right):$ if $s \sqsubseteq d$, and if $U_L\left(s\right)$ exists, then $U_L\left(s\right) \le U\left(d\right)$.
	\item $\left(\forall m \in \mathbb{Z}\right) \left(\exists s \in D^p\right): U_L\left(s\right) > m$.
	\end{itemize}
\end{unbounded_above}

$U$ is \emph{computably unbounded from below} if $-U$ is computably unbounded from above.

Note in particular that any computable function on $X^\mathbb{N}$ which is unbounded from above is computably unbounded from above, and any computable function which is unbounded from below is computably unbounded from below.

The following lemma will help us find environments which generate large amounts of utility. When considering $f$ in the lemma, think of $U_L$ above.

\begin{heaven_finder}
	\label{heav}
	Suppose $C \subseteq X^*$, and $f: C \rightarrow \mathbb{Z}$ is a computable partial function such that $\left(\forall m \in \mathbb{Z}\right) \left(\exists c \in C\right): f\left(c\right) > m$. Then there exists a computable total function $H: \mathbb{Z} \rightarrow C$ such that, $\left(\forall m \in \mathbb{Z}\right): f \circ H\left(m\right) \ge m$.
\end{heaven_finder}

In other words, given an \emph{unbounded} partial function $f$, there is a computable function $H$ which finds an input on which $f$ will exceed any given bound.

\begin{proof}
	First we'll index $C$; let $C = \left\{c_1, c_2, \dots\right\}$.

	If $f$ were a total function, we could simply let $H\left(m\right) = c_{\min \left\{i \in \mathbb{N} : f(c_i) > m\right\}}$. We would compute this by first computing $f(c_1)$, then $f(c_2)$, etc. Unfortunately we only have that $f$ is a partial function, so we can not proceed in this way.

	Instead, we'll note that for any input on which $f$ halts, it must halt in a specific number of steps. The Cantor pairing function $\pi: \mathbb{N} \times \mathbb{N} \rightarrow \mathbb{N}$, $\pi\left(k_1, k_2\right) = \frac{1}{2} (k_1 + k_2) (k_1 + k_2 + 1) + k_2$ is a bijection, so we can use $\pi^{-1}$ to index all pairs of natural numbers. Then we can simulate $f$ on every possible input for every number of steps, which will allow us to evaluate $f$ on every input for which $f$ halts.

\begin{verbatim}
	def H(m):
	    for n in (1, 2, 3, ...):
	        let (t, i) = pi^-1(n)
	        simulate f(c_i) for t steps
	            ... if it does not finish:
	                do nothing.
	            ... if it does finish:
	                if f(c_i) >= m:
	                    return c_i
\end{verbatim}

	Then $H$ is a computable total function and $f \circ H\left(m\right) \ge m$. 
\end{proof}

\section{Results}

Let $R$ be the set of all computable partial functions mapping $\mathbb{N}$ to $\mathbb{N}$, and let $\theta: \mathbb{N} \rightarrow R$ be a computable index (analogous to our other index function $\phi$).

Let
\begin{equation}
	\label{B_def}
	B(n) = \max_{k \le n} \theta_k\left(0\right)
\end{equation}

\begin{bb}
	\label{bb_lemma}
	Let $f \in R$ be a total function. Then $B(n) > f(n)$ infinitely often.
\end{bb}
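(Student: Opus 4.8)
The plan is to argue by contradiction and reduce the ``infinitely often'' claim to the statement that $B$ has no total computable upper bound; the result is then a Busy-Beaver-style diagonalization. Suppose the conclusion fails, so that $B(n) > f(n)$ for only finitely many $n$; then there is an $N$ with $B(n) \le f(n)$ for all $n \ge N$. I would convert this into an everywhere bound by hard-coding the finitely many small values: define a total computable $g$ by setting $g(n) = f(n)$ for $n \ge N$ and, for each $n < N$, letting $g(n)$ be any natural number $\ge B(n)$. There are only finitely many such $n$, so these constants can be placed in a finite lookup table; for the $n$ at which $B(n)$ is undefined, i.e.\ none of $\theta_0, \dots, \theta_n$ halt on $0$, I simply set $g(n) = f(n)$. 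Then $g$ is total computable and $B(n) \le g(n)$ for every $n$.

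Next I would extract the key consequence of such a bound. Since $\theta_k(0)$, whenever it halts, is one of the terms in the maximum defining $B(k)$, we have for every $k$: if $\theta_k(0)$ halts then $\theta_k(0) \le B(k) \le g(k)$. The heart of the proof is to manufacture an index that violates this. Using the s-m-n theorem, I would define a total computable function $\psi$ so that $\theta_{\psi(e)}$ is the program ``on any input, output $g(e)+1$''; this is legitimate precisely because $g$ is total computable. By the recursion theorem (Kleene's fixed-point theorem for the numbering $\theta$) there is an index $e$ with $\theta_e = \theta_{\psi(e)}$, hence $\theta_e$ is the constant function with value $g(e)+1$, and in particular $\theta_e(0)$ halts with value $g(e)+1$.

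This is the desired contradiction: $\theta_e(0)$ halts, so the bound forces $\theta_e(0) \le g(e)$, yet $\theta_e(0) = g(e)+1 > g(e)$. Therefore no total computable $g$ can dominate $B$ everywhere, so the original assumption was false and $B(n) > f(n)$ infinitely often. I expect the main obstacle to be the self-reference in the construction: the value we want $\theta_e$ to output at $0$ depends on the very index $e$ we are defining, which is exactly what the recursion theorem supplies and is the reason a naive argument with constant or padded functions fails (padding inflates the index while leaving the output fixed, so $g$ at the large index outruns it). The remaining points need only routine care: checking that $\theta$ is an acceptable numbering so that s-m-n and the fixed-point theorem apply, and handling the arguments $n$ at which the maximum defining $B$ is empty.
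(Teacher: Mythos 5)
Your proof is correct and follows essentially the same route as the paper's: both turn the finitely-many-exceptions hypothesis into an everywhere bound on $B$ by a total computable function (the paper adds a constant $c$ where you use a finite lookup table), and both then apply the recursion theorem to obtain an index $e$ whose function is constant with value exceeding that bound at $e$ itself, forcing $B(e) \ge \theta_e(0) > B(e)$. The paper's appeal to ``a corollary of the Recursion Theorem'' is exactly the s-m-n plus Kleene fixed-point combination you spell out explicitly.
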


\begin{proof}
	Suppose not. Then $B(n) > f(n)$ only finitely many times, so there exists some $c \in \mathbb{N}$ such that $(\forall n \in N): f(n) + c > B(n)$.

	Let $C(n, m) = f(n) + c$. By a corollary of the Recursion Theorem, there exists $m \in \mathbb{N}$ such that $(\forall n \in \mathbb{N}): \theta_m(n) = C(m, n) = f(m) + c$.

	By definition, $B(m) \ge \theta_m(0) = C(m,0) = f(m) + c > B(m)$. So $B(m) > B(m)$, which is a contradiction.
\end{proof}

Now suppose that at time $n+1$, the agent has already taken actions $y_1^n$ and made observations $x_1^n$, and is considering the expected utility of policy $p$. Let $D = \{s \in X^\mathbb{N} : s_1^n = x_1^n\}$.

\newtheorem{main_thm}{Theorem}
\begin{main_thm}
	\label{big_thm}
	If $U$ is computably unbounded from above on $D$, then \newline $E\left( U\left(x_1^n \Psi\left(Q, p, y_1^n, x_1^n\right)\right) | \Gamma_Q(y_1^n) = x_1^n \right)$ is either undefined or $+\infty$.
\end{main_thm}

\begin{proof}
	Let $U_L: D^p \rightarrow \mathbb{Z}$ be as in definition \ref{un_abv}. Then by Lemma \ref{heav}, there exists $H: \mathbb{Z} \rightarrow D^p$ such that $\left(\forall m \in \mathbb{Z}\right): U_L(H(m)) > m$.

	$H$ here is intended to be used to construct sequences with high utility. Since $H$ outputs a string rather than a sequence, we will pad it to get a sequence. Let $c \in X$ be some arbitrary word in the perception alphabet. Then let $\bar{H}: \mathbb{Z} \rightarrow D$, where $\bar{H}(n)$ is a sequence beginning with $H(n)$, followed by $c, c, c, \dots$.

	For brevity, let $W_p\left(q\right) = x_1^n \Psi\left(q, p, y_1^n, x_1^n\right)$. $W_p\left(q\right)$ represents the complete sequence of perceptions received by the agent, assuming that it continues to implement policy $p$ in environment $q$.

	We will now break up the expected utility into two terms, depending on whether or not $Q \in \Omega_C$.

	\begin{eqnarray*}
		E\left( U\left(W_p\left(Q\right)\right) | \Gamma_Q(y_1^n) = x_1^n \right) \\
		= P(Q \in \Omega_C) E\left( U\left(W_p\left(Q\right)\right) | \Gamma_Q(y_1^n) = x_1^n, Q \in \Omega_C \right) \\
		+ P(Q \notin \Omega_C) E\left( U\left(W_p\left(Q\right)\right) | \Gamma_Q(y_1^n) = x_1^n, Q \notin \Omega_C \right)
\\
		= \sum_{q \in \Omega_C} U\left(W_p\left(q\right)\right) P\left(\{q\} | \Gamma_Q(y_1^n) = x_1^n\right) 
\\ + P(Q \notin \Omega_C) E\left( U\left(W_p\left(Q\right)\right) | \Gamma_Q(y_1^n) = x_1^n, Q \notin \Omega
_C \right)
	\end{eqnarray*}

	We will show that the series:

	\begin{equation*}
		\sum_{q \in \Omega_C} U\left(W_p\left(q\right)\right) P\left(\{q\} | \Gamma_Q(y_1^n) = x_1^n\right)
	\end{equation*}

	has infinitely many terms $\ge 1$. We will do this by finding a sequence of environments whose utilities grows very quickly - more quickly than their probabilities can shrink.

	By equation \ref{B_def}, for each $j \in \mathbb{N}$ there exists $u_j \in \mathbb{N}$ such that $u_j \le j$ and $\theta_{u_j}\left(0\right) = B\left(j\right)$.

	Now we define a map on function indices $G: \mathbb{N} \rightarrow \mathbb{N}$ such that:

	\begin{displaymath}
		\phi_{G\left(n\right)}\left(\gamma\right) = H(\theta_n(0))_{|\gamma|}
	\end{displaymath}

	So $G$ takes the $\theta$-index of an $\mathbb{N} \rightarrow \mathbb{N}$ function (say, $g$), and returns the $\phi$-index of an environment which is compatible with all the data so far, and which is guaranteed to produce utility greater than $g(0)$. We can assume that $G$ is a computable function.

	So our sequence of environments will be $\left\{\phi_{G(u_j)}\right\}_{j=1}^\infty$.

	Then $U\left(W_p(\phi_{G(u_j)})\right) \ge B(j)$.
	Now let
	\begin{equation}
		\bar{\rho}(j) = \lceil \max_{k \le j} \frac{1}{\rho(G(k))} \rceil
	\end{equation}
	Then $\bar{\rho}$ is a computable, nondecreasing function. Since $\bar{\rho}$ is computable, $B(j) \ge \bar{\rho}(j)$ infinitely often. Since $u_j \le j$, then by definition, $\bar{\rho}(j) \ge \frac{1}{\rho(G(u_j))} \ge \frac{1}{P(\{\phi_{G(u_j)}\})}$.
	$P(\Gamma_Q(y_1^n) = x_1^n | Q = \phi_{G(u_j)}) = 1$, so by Bayes' Rule, $P(\{\phi_{G(u_j)}\} | \Gamma_Q(y_1^n) = x_1^n) \ge P(\{\phi_{G(u_j)}\})$. Since both sides are positive, we take the reciprocal to get $\frac{1}{P(\{\phi_{G(u_j)}\} | y_1^n, x_1^n)} \le \frac{1}{P(\{\phi_{G(u_j)}\})}$.
	By transitivity, $U\left(W_p(\phi_{G(u_j)})\right) \ge \frac{1}{P(\{\phi_{G(u_j)}\} | \Gamma_Q(y_1^n) = x_1^n)}$ infinitely often, so $U\left(W_p(\phi_{G(u_j)})\right) P(\{\phi_{G(u_j)}\} | \Gamma_Q(y_1^n) = x_1^n) \ge 1$ infinitely often. Since the series contains infinitely many terms $\ge 1$, its limit is either $+\infty$ or nonexistent.
\end{proof}

\begin{cor_above}
	\label{from_below}
	If $U$ is computably unbounded from below on $D$, then\newline $E\left(U\left(x_1^n \Psi\left(Q, p, y_1^n, x_1^n\right)\right) | \Gamma_Q(y_1^n) = x_1^n \right)$ is either undefined or $-\infty$.
\end{cor_above}
\begin{proof}
	By definition, $-U$ is computably unbounded from above. Thus, by theorem \ref{big_thm}, $E\left(-U\left(x_1^n \Psi\left(Q, p, y_1^n, x_1^n\right)\right) | \Gamma_Q(y_1^n) = x_1^n \right)$ is either undefined or $+\infty$. So $E\left(U\left(x_1^n \Psi\left(Q, p, y_1^n, x_1^n\right)\right) | \Gamma_Q(y_1^n)\right) = -E\left(-U\left(x_1^n \Psi\left(Q, p, y_1^n, x_1^n\right)\right) | \Gamma_Q(y_1^n) = x_1^n \right)$ is either undefined or $-\infty$.
\end{proof}

\begin{cor_both}
	If $U$ is computably unbounded from both below and above on $D$, then $E\left( U\left(x_1^n \Psi\left(Q, p, y_1^n, x_1^n\right)\right) | \Gamma_Q(y_1^n) = x_1^n \right)$ is undefined.
\end{cor_both}
\begin{proof}
	By theorem \ref{big_thm}, $E\left( U\left(x_1^n \Psi\left(Q, p, y_1^n, x_1^n\right)\right) | \Gamma_Q(y_1^n) = x_1^n \right)$ is either undefined or $+\infty$. By corollary \ref{from_below}, it is either undefined or $-\infty$. Thus it is undefined.
\end{proof}

\section{Discussion}

Our main result implies that if you have an unbounded, perception determined, computable utility function, and you use a Solomonoff-like prior (Solomonoff, 1964), then you have no way to choose between policies using expected utility. So which of these things should we change?

We could use a non-perception determined utility function. Then our main result would not apply. In this case, the existence of bounded expected utility will depend on the utility function. It may be possible to generalize our argument to some larger class of utility functions which have a different domain. 

We could use an uncomputable utility function. For instance, if the utility of any perception-sequence is defined as equal to its Kolmogorov complexity, then the utility function is unbounded but the expected utility of any policy is finite.

We could use a smaller hypothesis space; perhaps not all computable environments should be considered.

The simplest approach may be to use a bounded utility function. Then convergence is guaranteed.

\end{document}